\newacronym{cnn}{CNN}{convolutional neural network}
\newacronym{mlp}{MLP}{multi-layer perceptron}
\newacronym{ntk}{NTK}{neural tangent kernel}
\newacronym{ode}{ODE}{ordinary differential equation}
\newacronym{relu}{ReLU}{rectified linear unit}
\newacronym{lth}{LTH}{Lottery Ticket Hypothesis}
\newacronym{ntt}{NTT}{neural tangent transfer}
\newcommand{\RR}{\mathbb{R}}
\newcommand{\mb}{\bm{m}}
\newcommand{\xb}{\bm{x}}
\newcommand{\Ib}{\bm{I}}
\newcommand{\Hb}{\bm{H}}
\newcommand{\yb}{\bm{y}}
\newcommand{\ab}{\bm{a}}
\newcommand{\Xcal}{\bm{\mathcal{X}}}
\renewcommand{\d}{\mathrm{d}}
\newcommand{\thetab}{\bm{\theta}}
\newcommand{\flin}{f^{\text{lin}}}
\theoremstyle{plain}
\newtheorem{prop}{Proposition}
\icmltitlerunning{Neural Tangent Transfer}
\begin{document}

\twocolumn[
\icmltitle{Finding trainable sparse networks through Neural Tangent Transfer}

\begin{icmlauthorlist}
\icmlauthor{Tianlin Liu}{fmi,unibas}
\icmlauthor{Friedemann Zenke}{fmi}
\end{icmlauthorlist}

\icmlaffiliation{fmi}{Friedrich Miescher Institute for Biomedical Research, Basel, Switzerland.}
\icmlaffiliation{unibas}{University of Basel, Basel, Switzerland}
\icmlcorrespondingauthor{Friedemann Zenke}{friedemann.zenke@fmi.ch}
\icmlkeywords{Pruning, Sparse neural network}

\vskip 0.3in
]

\printAffiliationsAndNotice{}  
\begin{abstract}
Deep neural networks have dramatically transformed machine learning, but their memory and energy demands are substantial.  The requirements of real biological neural networks are rather modest in comparison, and one feature that might underlie this austerity is their sparse connectivity.   In deep learning, trainable sparse networks that perform well on a specific task are usually constructed using label-dependent pruning criteria.  In this article, we introduce Neural Tangent Transfer, a method that instead finds trainable sparse networks in a label-free manner. 
Specifically, we find sparse networks whose training dynamics, as characterized by the neural tangent kernel, mimic those of dense networks in function space. Finally, we evaluate our label-agnostic approach on several standard classification tasks and show that the resulting sparse networks achieve higher classification performance while converging faster.
\end{abstract}

\section{Introduction}

Deep neural networks achieve human-level performance in a variety of domains \citep{LeCun2015,Schmidhuber2015,silver_mastering_2017}. 
Unlike biological neural networks, however, deep learning systems require extensive computational resources and energy. 
This demand poses a major impediment for future applications in which deep networks are embedded in always-on edge devices and in the Internet of Things (IoT) \citep{Neftci2018data,Roy2019towards,lecun_deep_2019}.

One approach to make artificial neural networks more energy-efficient is to exploit sparseness at both the activity and connectivity level \citep{gong_compressing_2014, lebedev_speeding-up_2015, jaderberg_speeding_2014}. This approach echoes the sparseness principle of biological neural systems, which underlies their superior energy efficiency \citep{Sterling2017principles}. 
To this date, most of the studies on sparse artificial neural networks have focused on \emph{post hoc} pruning, whereby sparse networks are derived from \emph{trained} dense networks. 
This approach, however, is computationally costly 
because it does not allow us to benefit from sparseness during training. 

To tackle this issue, recent work \citep{Lee2018snip, Wang2020picking} proposed identifying trainable sparse networks by pruning dense networks \emph{before} training them. Such \emph{foresight pruning} results in sparse networks that can learn quickly and generalize well in subsequent supervised learning tasks despite having only a few adjustable, nonzero parameters. 
Although they perform well in many scenarios, existing foresight pruning methods can be improved in two ways. First, current methods solely rely on labeled data to determine which parameters to prune. But since labeled data are often scarce \citep{Xu2019uncoupled, Rigollet2019uncoupled}, label-free pruning methods that leverage abundant unlabeled data could offer a decisive advantage. 
Second, existing foresight pruning approaches have focused on global pruning criteria, which are known to remove weights primarily from the fully connected layers of \glspl{cnn} while preserving most parameters in the convolutional layers. Yet, because convolutional layers are responsible for most of the computational burden \citep{Yang2017designing}, global pruning results in comparatively small savings. 
Here, pruning methods that effectively sparsify convolutional filters in a layerwise manner could offer substantial performance improvements.

In this article, we introduce \gls{ntt}, a foresight pruning method that works well in the layerwise setting and without labeled data. 
To the best of our knowledge, this is the first work to find trainable sparse networks in a \emph{label-free} manner. 
Importantly, we show that our method reliably finds trainable \glspl{cnn} with sparse convolutional filters, a situation in which existing foresight pruning methods struggle. 

\section{Prior work \label{sec:prior-work}}
The problem of instantiating sparse neural networks has been considered in several previous studies. 
They can be broadly divided into three categories. 
First, \emph{post hoc pruning}  approaches rely on removing redundant parameters from trained, dense networks.
Second, \emph{on-the-fly pruning} methods enforce sparsity constraints during supervised training.
Finally, \emph{foresight pruning} refers to directly finding a sparse network from scratch that can be trained later to high accuracy. 

\textbf{Post hoc pruning} approaches operate on trained neural networks and attempt to remove network parameters or units that contribute marginally to task performance using different pruning criteria.
Established pruning criteria include parameter magnitude \citep{Jose1988comparing,Stroem1997sparse} 
and the Hessian of the loss function with respect to the weights \citep{LeCun1990, Hassibi1994optimal}. 
Finally, it is common practice to fine-tune or iteratively re-train pruned models, to improve their final performance \citep{Han2015learning,Guo2016dynamic,Zhu2018to}. 

\textbf{On-the-fly pruning} is an alternative approach in which network sparsity is enforced during supervised training. 
This can be achieved in several ways. 
First, by introducing a sparsity-inducing term in the penalty function of supervised training \citep{Chauvin1989, Collins2014MemoryBD, Molchanov2017variational, CarreiraPerpinan2018, Louizos2018learning}. 
Such penalty terms encourage parameters to be close to zero. 
Second, by use of dynamic network rewiring rules over training time that keeps network sparsity below a given threshold \citep{Mocanu2018,Bellec2018deep,Yan2019efficient, Evci2020rigging}.

\textbf{Foresight pruning} refers an approach in which one first prunes a network at its initialization and then trains the pruned network to convergence. In part, foresight pruning is motivated by evidence that specific sparse networks can be trained to yield comparable performance to the corresponding dense model \citep{Liu2019rethinking,Frankle2019,Morcos2019one}. 
To that end, \citet{Lee2018snip, Lee2020signal} attempted to identify trainable sparse network structure at the network initialization stage based on the connection sensitivity criterion. \citet{Wang2020picking} proposed to find sparse networks that preserve the error gradients after pruning

\section{Neural Tangent Transfer framework}

All previous approaches have relied on labeled data to find trainable sparse neural networks.
In this article, we develop a label-free approach to find sparse networks whose training evolution in the function space are comparable to those of their corresponding dense counterparts. The resulting sparse networks exhibit higher performance when trained on subsequent supervised tasks.
We approach this problem by considering the evolution of a neural network's output from a given starting point, determined by its random initialization, to an endpoint, characterized by the trained network parameters.
To yield good performance, the endpoint of the sparse network's output evolution needs to lie close to the output of a well-performing network. 
However, in supervised learning, the endpoint itself depends on labeled data.
The central idea of our approach is to find sparse networks that share the \emph{same} starting point in function space as a corresponding dense network 
and whose network output evolution during supervised training is characterized by similar dynamics.
If both criteria are satisfied, also the endpoints will be closely matched, thus ensuring the trainability of the sparse network (Fig.~\ref{fig:schematic-outputspace}). 
To influence the training dynamics of a sparse network \emph{before} training it, we leverage recent theoretical insights pertaining to the \gls{ntk} \citep{Jacot2018neural, Arora2019On, Lee2019wide, Chizat2019lazy}, which allow anticipating the training dynamics in a label-free manner, 
and devise an efficient algorithm that exploits this knowledge to constrain the sparse network's output evolution to our advantage.

\begin{figure}[tbhp] 
\center
    \includegraphics[width=0.9\columnwidth]{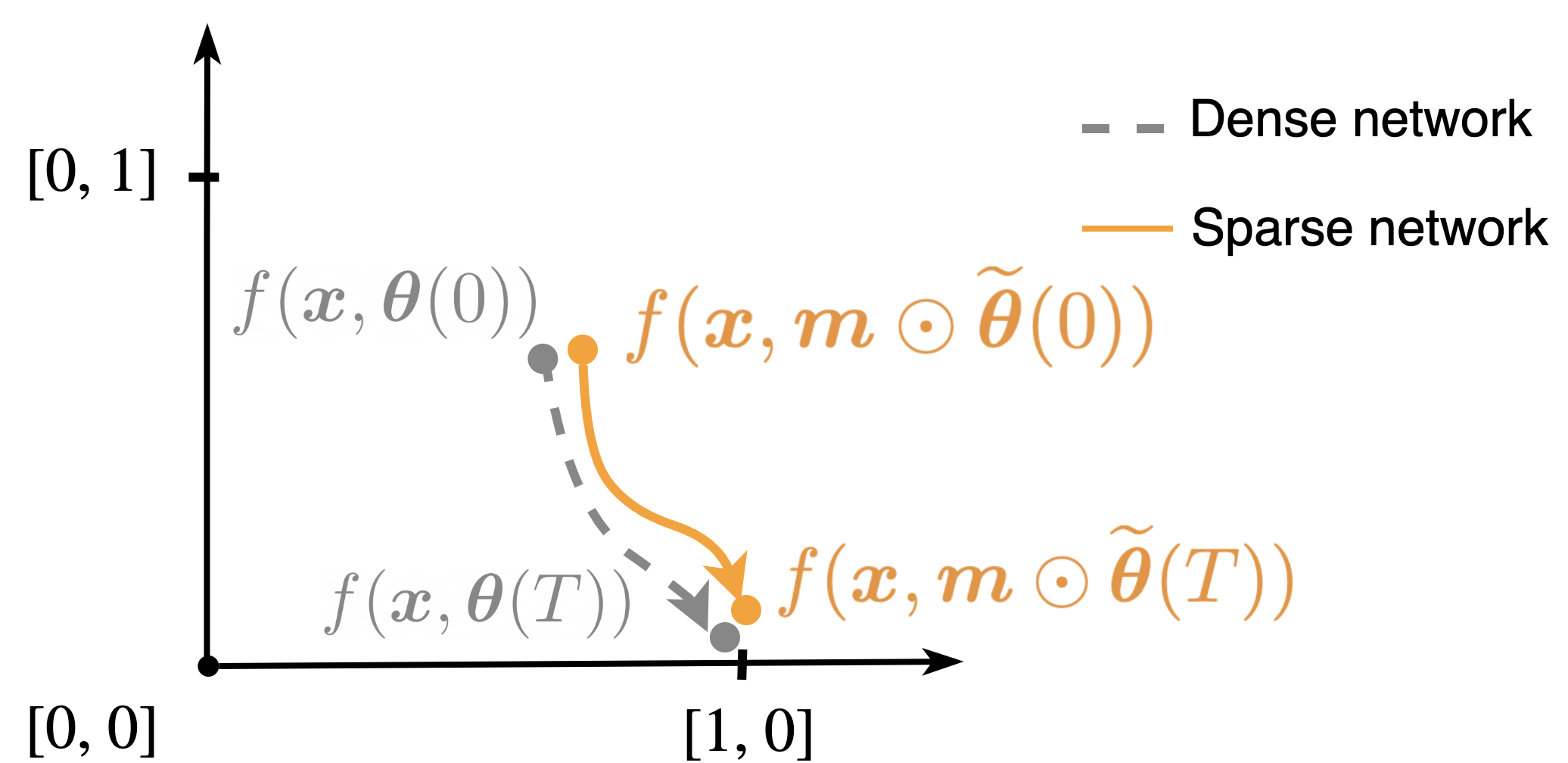}
    \caption{Schematic illustration of neural networks' output evolution during supervised training from time $t = 0$ (starting point) to $t = T$ (endpoint). 
    Here, $[1,0]$ and $[0, 1]$ are one-hot targets for a binary classification task; 
    the input data $\xb$ comes from the class whose correct label is $[1,0]$. The dashed grey curve shows to the output evolution of a dense network, which, during training, moves toward the correct target $[1,0]$. 
    To ensure trainability, we aim to find a sparse network whose starting point and output evolution (orange, solid curve) during subsequent training closely follows the dense one's, 
    such that both output evolution terminate at similar endpoints. \label{fig:schematic-outputspace}}
\end{figure}

More formally, consider a densely parameterized feed-forward neural network $f(\xb, \thetab)$ with input data $\xb$ and trainable parameters $\thetab$. 
During supervised learning, its output evolution is given by $\{f(\xb, \thetab (t)) \}_{t \geq 0}$ with $t$ being the training time (e.g., iteration number) and $\thetab (t)$ the parameters at that time. Similarly, consider training a sparse network $f(\xb, \mb \odot \widetilde{\thetab})$, where $\widetilde{\thetab}$ are the trainable parameters, 
$\mb$ is a fixed sparsity-inducing mask, and $\odot$ is the Hadamard product. The output evolution of the sparse network during training is described by $\{f(\xb, \mb \odot \widetilde{\thetab} (t)) \}_{t \geq 0}$. For the sparse network to yield good performance, we aim to select a good combination of $\mb$ and $\widetilde{\thetab} (0)$ to ensure that its output evolution approximately follows the dense network's evolution throughout training, i.e., 
\begin{equation} \label{eqn:dense-sparse-net-similar-dynamics}
f(\xb, \mb \odot  \widetilde{\thetab} (t)) \approx f(\xb, \thetab (t)),
\end{equation}
for all training inputs $\xb$ and training time $t \geq 0$ (cf.\ Fig.~\ref{fig:schematic-outputspace}).

With this goal in mind, we proceed in the following steps.
First, we characterize both the dense and sparse networks' output evolution  (Section~\ref{subsec:output-dynamics}) during supervised learning using the tool of \glspl{ntk}.
Second, we motivate an objective function aimed at minimizing the difference between the two respective evolution (Section~\ref{subsec:ntt_objective}). 
Finally, we introduce an algorithm that minimizes this objective function and, by tuning the sparse network to have similar training dynamics to a corresponding  dense network, finds a trainable sparse network in a label-free manner (Section~\ref{subsection:ntt_implementation}).

\subsection{Label-free characterization of output evolution for dense and sparse networks \label{subsec:output-dynamics}}

In this subsection, we aim to characterize the output evolution of dense and sparse neural networks during training using analytical insights and tools developed in the context of linearized networks and \gls{ntk} \citep{Jacot2018neural, Arora2019On, Lee2019wide, Chizat2019lazy}. 

We first focus on a dense neural network $f(\cdot, \thetab): \RR^d \to \RR$ for $d$ being the dimension of input and $\thetab$ the vector of all parameters. Examples of such dense networks include \glspl{mlp} with fully connected layers and \glspl{cnn} with dense filters. For ease of notation, we assume that the network has a scalar-valued output, but the framework can easily be extended to the vector-valued case. Given a training dataset $\{\xb_i, y_i\}_{i = 1}^n \subset \RR^d \times \RR$ of $n$ input-target pairs, we consider the empirical risk minimization problem with the quadratic loss function 
\begin{eqnarray} 
 L (\thetab) &=& \frac{1}{2} \sum_{i = 1}^n \Big ( f(\xb_{i}, \thetab) - y_{i} \Big )^2\\
 & = & \frac{1}{2} \|f(\Xcal, \thetab) - \yb \|_2^2.
 \label{eqn:quadratic-loss}
 \end{eqnarray}
where for ease of notation we have written $\Xcal = \{\xb_i\}_{i = 1}^n$ as the collection of training inputs, $f(\Xcal, \thetab )=  \big( f (\xb_i, \thetab) \big)_{i\in[n]} \in \RR^n $ as the concatenation of network outputs of input data $\xb_i$, and $\yb = (y_i)_{i \in [n]} \in \RR^n$ as the corresponding targets. 

We now consider training the network with continuous-time gradient descent as characterized by the gradient flow 
\begin{equation} \label{eqn:gradient-flow}
\frac{\d \thetab (t)}{ \d t}   = - \nabla_{\thetab} L  \big (\thetab (t)   \big ),
 \end{equation}
for training time $t \geq 0$.

From Eqns.~\eqref{eqn:quadratic-loss} and \eqref{eqn:gradient-flow}, and by applying the chain rule follows the network's output evolution during training 
\begin{equation} \label{eqn:output-dynamics}
\frac{\d f(\Xcal, \thetab (t) ) }{ \d t}  = - \Hb_{\thetab (t)} \Big [ f \Big ( \Xcal, \thetab (t) \Big ) - \yb \Big ], 
\end{equation}
where $\Hb_{\thetab (t)}$ is an $n \times n$ positive semidefinite matrix whose $(i, j)$-th entry is the value of the inner product $\Big \langle \nabla_{\thetab} f \big(\xb_i, \thetab(t) \big), \nabla_{\thetab} f \big(\xb_j, \thetab(t) \big) \Big \rangle$. At initialization $t = 0$, the inner product function \small{\begin{alignat}{2}
K_{\thetab(0)} ( \cdot, \cdot): \RR^d & \times & \RR^d & \to \RR, \nonumber \\
(\xb &,& \xb') & \mapsto  \Big \langle \nabla_{\thetab} f(\xb, \thetab(0)),  \nabla_{\thetab} f(\xb', \thetab(0)) \Big \rangle
\end{alignat}} \normalsize
is called the empirical neural tangent kernel of the network \citep{Jacot2018neural}.

The dynamics of Eqn.~\eqref{eqn:output-dynamics} are difficult to analyze because of the time-varying matrix $\Hb_{\thetab (t)}$. 
To simplify the analysis, previous work has resorted to studying the neural network's linearized approximation around initialization \citep{Lee2019wide,Chizat2019lazy}. Formally, for a neural network model $f( \xb, \thetab)$ with input $\xb$, parameters $\thetab$, and initialization $\thetab(0)$, its linearization $\flin(\xb, \thetab )$ around initial parameters is defined as
\begin{equation} \label{eqn:linearized-network} \small
\flin(\xb, \thetab ) \coloneqq f  \big(\xb, \thetab(0)  \big) + \Big \langle {\thetab} - \thetab(0), \nabla_{\thetab} f \big(\xb, \thetab(0) \big) \Big \rangle. 
\end{equation} \normalsize

Let $ \flin (\Xcal, \thetab (t)) = \Big ( \flin (\xb_i, \thetab (t)) \Big )_{i\in[n]} \in \RR^n $ be the concatenation of linearized network outputs of all training samples $\xb_i$ at time $t$. The evolution of the linearized network's output is described by the following first order \gls{ode}
\begin{equation} \label{eqn:linearized-network-output-dynamics}
\frac{\d \flin (\Xcal, \thetab (t))}{\d t}  = - \Hb_{\thetab(0)} \Big [ \flin \Big (\Xcal, \thetab (t) \Big ) - \yb \Big ].
\end{equation}
Note that Eqn.~(\ref{eqn:linearized-network-output-dynamics}) represents a substantial simplification of Eqn.~\eqref{eqn:output-dynamics} because it only depends on $\Hb_{\thetab(0)}$, which is constant during training and fully characterized by the network's initialization. The solution of the corresponding system of linear \glspl{ode} is known analytically
\begin{equation} \label{eqn:linear-dynamics-solved} \small
\flin \Big (\Xcal, \thetab (t) \Big ) = e^{- t \Hb_{\thetab(0)}} \flin \Big (\Xcal, \thetab(0) \Big ) + \left [ \Ib - e^{-t \Hb_{\thetab(0)}} \right] \yb, 
\end{equation} \normalsize
for $t \geq 0$. 
While this linear approximation of a network's output evolution becomes exact as the width of the neural network goes towards infinity \citep{Jacot2018neural},
empirically it is known that the linear approximation is quite accurate even for finite-width networks \citep{Lee2019wide}. 

So far we have considered the output evolution of dense networks and their linearized approximations. 
We now turn to the analysis of neural networks with a fixed sparsity-inducing mask. In accordance with Eqn.~\eqref{eqn:linearized-network}, we define the linearized sparse network with input $\xb$, mask $\mb$, trainable parameters $\widetilde{\thetab}$, and initialization $\widetilde{\thetab} (0)$ as 
\begin{equation} \label{eqn:linearized-sparse-network}   \small
\begin{split}
& \flin(\xb, \mb \odot \widetilde{\thetab} ) \\
& \quad \coloneqq f  \Big(\xb, \mb \odot \widetilde{\thetab} (0)  \Big) + \Big \langle \widetilde{\thetab} - \widetilde{\thetab} (0), \nabla_{\widetilde{\thetab}} f \Big(\xb, \mb \odot \widetilde{\thetab} (0) \Big) \Big \rangle. 
\end{split}
\end{equation} \normalsize

In line with Eqn.~\eqref{eqn:linear-dynamics-solved}, the output evolution of linearized sparse sparse neural network in Eqn.~\eqref{eqn:linearized-sparse-network} admits the analytic solution
\begin{equation} \label{eqn:linear-dynamics-solved-sparse} \small
\begin{split}
& \flin (\Xcal, \mb \odot \widetilde{\thetab} (t))\\ 
& \quad = e^{- t \Hb_{\mb \odot \widetilde{\thetab} (0)}} \flin \Big (\Xcal, \mb \odot \widetilde{\thetab} (0) \Big ) + [ \Ib - e^{-t \Hb_{\mb \odot \widetilde{\thetab} (0)}}] \yb,
\end{split}
\end{equation} \normalsize
where $\Hb_{\mb \odot \widetilde{\thetab}(0) }$ is an $n \times n$ positive semidefinite matrix whose $(i, j)$-th entry is \mbox{$\langle \nabla_{\widetilde{\thetab}} f \big(\xb_i, \mb \odot \widetilde{\thetab} (0) \big  )  , \nabla_{\widetilde{\thetab}} f \big  (\xb_j, \mb \odot \widetilde{\thetab} (0) \big  ) \rangle$}.

\subsection{The neural tangent transfer (NTT) objective function \label{subsec:ntt_objective}}

After having characterized the training dynamics of dense and sparse networks, we proceed to use this knowledge in a label-free manner to find sparse neural networks whose training evolution will remain close to the corresponding dense network's evolution during subsequent training. 
In other words, for a given instance of dense network at initialization $f(\Xcal, \thetab (0))$, we wish to select $\mb$ and $\widetilde{\thetab} (0)$ that minimize the distance
\begin{equation} \label{eq:standard-nn-optimization}
 \sum_{t = t_0}^{t_T} \|f (\Xcal, \mb \odot \widetilde{\thetab} (t)) - f(\Xcal, \thetab (t)) \|_2^2
\end{equation}
during training in discrete time $0 \leq t_0 < \cdots < t_T$. Unfortunately, it is impossible to evaluate Eqn.~\eqref{eq:standard-nn-optimization} without labels because the trained parameters $\{\widetilde{\thetab} (t)\}_{t > 0}$ are the result of the label-dependent supervised learning procedure. We therefore proceed in two steps. First, we make use of the linear approximation motivated in the preceding subsection. Second, we derive an auxiliary objective that is sufficient to minimize our objective derived in the first step, but in a label-free manner.

We start by replacing the corresponding output evolution in  Eqn.~\eqref{eq:standard-nn-optimization} by its linear approximations
\begin{equation} \label{eq:linear-nn-optimization}
 \sum_{t = t_0}^{t_T} \|\flin (\Xcal, \mb \odot \widetilde{\thetab} (t)) - \flin(\Xcal, \thetab (t)) \|_2^2.
\end{equation}
Note that Eqn.~\eqref{eq:linear-nn-optimization} is still label-dependent due to its dependence on the trained parameters  $\{\widetilde{\thetab} (t)\}_{t > 0}$. 
Yet, by comparing the linearized network's outputs given by Eqn.~\eqref{eqn:linear-dynamics-solved} and Eqn.~\eqref{eqn:linear-dynamics-solved-sparse}, we see that a sufficient condition for minimizing the quantity in Eqn.~\eqref{eq:linear-nn-optimization} is to minimize the following label-free objective:
\begin{equation} \small \label{eq:ntt_objective}
\begin{split}
J_{\thetab(0)} \Big  (\mb \odot \widetilde{\thetab} (0) \Big ) &  = \frac{1}{n} \Big \| f \Big (\Xcal, \mb \odot \widetilde{\thetab} (0) \Big ) - f \Big (\Xcal, \thetab(0) \Big ) \Big\|_2^2 \\
& \quad + \frac{\gamma^2}{n^2} \Big \| \Hb_{\mb \odot \widetilde{\thetab} (0)} - \Hb_{\thetab(0)} \Big \|_{F}^2,
\end{split}
\end{equation}\normalsize
where $\gamma^2 > 0$ is a strength parameter and $\| \cdot \|_{F}$ denotes the Frobenius norm of a matrix. 
We note that the objective function $J_{\thetab(0)}$ contains two terms. The first term $\frac{1}{n} \| f (\Xcal, \mb \odot \widetilde{\thetab} (0)) - f (\Xcal, \thetab(0))\|_2^2$ measures the distance between the dense and the sparse network's starting point (cf. Fig. \ref{fig:schematic-outputspace}).
The second term $\frac{\gamma^2}{n^2} \| \Hb_{\mb \odot \widetilde{\thetab} (0)} - \Hb_{\thetab(0)} \|_{F}^2$ measures the discrepancy between the empirical \glspl{ntk} of the dense and sparse network. 
When minimized, the latter term encourages the output evolution of a sparse network to remain close to its dense counterpart during training.  
We call the optimization process that minimizes $J_{\thetab(0)} (\mb \odot \widetilde{\thetab} (0))$ \emph{neural tangent transfer} (NTT). As the NTT objective function depends on the dense network's initialization $\thetab(0)$, for convenience, we sometimes refer to the dense network $f(\cdot, \thetab(0))$ as the NTT teacher and the sparse network $f(\cdot, \mb \odot \widetilde{\thetab} (0))$ as the NTT student. 

Since the NTT objective function is based on a linear approximation of a neural network model, we are prompted to ask what we can expect if the approximation is exact. Proposition~\ref{prop:linear-opt} states that when the NTT teacher and student models are linear, the sparse student model that minimizes the NTT objective function possesses the same trainability as the dense teacher. That means, upon supervised training, the sparse student's output evolution matches its dense teacher's evolution exactly. 

\begin{prop} \label{prop:linear-opt}
Consider a linear NTT teacher model $f(\xb, \ab(0)) = \ab(0)^\top \xb$, where $\xb$ and $\ab(0) \in \RR^d$ are model input and initial parameters. Suppose that we are given a linear NTT student model $g_{\mb}(\xb, \widetilde{\ab}(0)) = (\mb \odot \widetilde{\ab}(0))^{\top} \xb$ whose initial parameters $\widetilde{\ab}(0)$ are NTT-optimal in the sense that $J_{\ab(0)}(\mb \odot \widetilde{\ab} (0)) = 0$. Then upon continuous-time and quadratic-loss based gradient descent training, the dense and sparse models' outputs evolve in the same way:
\[ f(\xb, \ab(t)) = g_{\mb}(\xb, \widetilde{\ab} (t)), \]
for all training inputs $\xb$ and time steps $t \geq 0$.
\end{prop}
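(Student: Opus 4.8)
The plan is to leverage the fact that for a \emph{linear} model the first-order Taylor expansion around initialization is exact, so that the closed-form output trajectory in Eqn.~\eqref{eqn:linear-dynamics-solved} holds without any approximation, and then simply read off the claim from the vanishing of the two terms of $J_{\ab(0)}$. First I would record the elementary identities for the two models. For the teacher, $\nabla_{\ab} f(\xb, \ab) = \xb$ independently of $\ab$, so $\flin(\xb,\ab) = f(\xb,\ab)$; the Gram matrix $\Hb_{\ab(0)}$ has $(i,j)$ entry $\langle \xb_i, \xb_j\rangle$ and is constant in $t$; and, writing $\Xcal$ also for the $n\times d$ matrix whose rows are the $\xb_i$, we have $f(\Xcal, \ab(t)) = \Xcal\,\ab(t)$. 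For the student, $\nabla_{\widetilde{\ab}}\, g_{\mb}(\xb,\widetilde{\ab}) = \mb \odot \xb$ independently of $\widetilde{\ab}$, so $g_{\mb}$ equals its own linearization too; $\Hb_{\mb \odot \widetilde{\ab}(0)}$ has $(i,j)$ entry $\langle \mb\odot\xb_i,\, \mb\odot\xb_j\rangle$ and is likewise constant in $t$; and $g_{\mb}(\Xcal, \widetilde{\ab}(t)) = \Xcal\,(\mb \odot \widetilde{\ab}(t))$. Hence Eqns.~\eqref{eqn:linear-dynamics-solved} and~\eqref{eqn:linear-dynamics-solved-sparse} give the \emph{exact} output evolution of the two models under quadratic-loss gradient flow.

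The second step is to invoke $J_{\ab(0)}(\mb \odot \widetilde{\ab}(0)) = 0$. Since $J_{\ab(0)}$ is a sum of two nonnegative quantities and $\gamma^2 > 0$, each vanishes separately: the first term yields $f(\Xcal, \mb\odot\widetilde{\ab}(0)) = f(\Xcal, \ab(0))$, i.e.\ identical starting points in $\RR^n$, and the second yields $\Hb_{\mb\odot\widetilde{\ab}(0)} = \Hb_{\ab(0)}$, i.e.\ identical (and $t$-independent) Gram matrices. Substituting these two equalities into the closed forms from step one makes the right-hand sides of Eqns.~\eqref{eqn:linear-dynamics-solved} and~\eqref{eqn:linear-dynamics-solved-sparse} coincide term by term, so $g_{\mb}(\Xcal, \widetilde{\ab}(t)) = f(\Xcal, \ab(t))$ for all $t \geq 0$; reading this off coordinatewise gives $g_{\mb}(\xb_i, \widetilde{\ab}(t)) = f(\xb_i, \ab(t))$ for every training input $\xb_i$, which is the assertion. (One can also bypass the closed form: setting $\bm{u}(t) \coloneqq f(\Xcal, \ab(t))$ and $\bm{v}(t) \coloneqq g_{\mb}(\Xcal, \widetilde{\ab}(t))$, a direct differentiation using the gradient-flow equations shows both solve $\dot{\bm{w}} = -\Hb_{\ab(0)}(\bm{w} - \yb)$ once $\Hb_{\mb\odot\widetilde{\ab}(0)} = \Hb_{\ab(0)}$ is used, and they share the initial value $\bm{u}(0) = \bm{v}(0)$, so they agree by uniqueness of solutions of linear ODEs.)

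I do not expect a real obstacle: the argument is essentially bookkeeping once the models are recognized as their own linearizations. The two points needing mild care are (i) checking that the linearization is exact for the masked student, which is immediate since $g_{\mb}$ is affine in $\widetilde{\ab}$; and (ii) the reach of ``for all training inputs $\xb$'': the argument proves equality of outputs on the $n$ points collected in $\Xcal$, but not $\ab(t) = \mb\odot\widetilde{\ab}(t)$ as vectors in $\RR^d$ --- indeed $J_{\ab(0)} = 0$ only constrains $\ab(0)$ and $\mb\odot\widetilde{\ab}(0)$ modulo $\ker \Xcal$ --- so one should resist claiming agreement on arbitrary inputs outside the training set.
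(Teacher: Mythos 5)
Your proposal is correct and follows essentially the same route as the paper's proof: both derive the exact closed-form output trajectories of the two linear models under quadratic-loss gradient flow (the paper via the linear special case of a lemma of Arora et al., you via the observation that a linear model equals its own linearization), then split $J_{\ab(0)}=0$ into the two vanishing nonnegative terms to match initial outputs and Gram matrices, and compare the closed forms. Your closing remark that the conclusion holds only on the training inputs (not for arbitrary $\xb$, since the parameters need agree only modulo $\ker\Xcal$) is a correct reading that matches the paper's own restriction to $\xb \in \{\xb_i\}_{i=1}^n$.
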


The proof to the Proposition \ref{prop:linear-opt} uses simple calculus and can be found in Appendix \ref{proof-to-proposition}. 

\subsection{Algorithmic implementation of NTT \label{subsection:ntt_implementation}}

The NTT objective function (Eqn.~\eqref{eq:ntt_objective}) can readily be optimized using stochastic gradient descent (SGD) techniques. Specifically, we take following procedural steps:
\begin{enumerate}
\item \textbf{Choose a network architecture.} This includes fixing the number of layers, number of units per layer, etc. 
\item \textbf{Instantiate the NTT teacher network.} Randomly initialize a dense network $f(\cdot, \thetab)$ according to an initialization scheme (e.g., Glorot initialization \citep{Glorot10}). 
\item \textbf{Initialize the NTT student network.} For a desired sparsity level of the network $0 < p < 1$, remove $1 - p$ fraction of the weight parameters of $\thetab$ having the smallest magnitudes to create a binary mask $\mb$. The NTT student is initialized to be $f(\cdot, \mb \odot \widetilde{\thetab})$, where $\widetilde{\thetab} =  \thetab$. 
\item \textbf{Set the hyperparameters for SGD.} Choose SGD batch size, total number of iterations, optimizer, learning rate, the strength parameter $\gamma^2$ in the NTT objective function of Eqn.~\eqref{eq:ntt_objective}, and a \emph{mask update frequency} (see the next step below).
\item \textbf{Adjust NTT student network parameters and masks via SGD.} To update the parameters, we fix a mask $\mb$ and repeatedly draw minibatches of label-free data samples to perform SGD with weight decay:
\begin{equation} \label{eq:ntt_update} 
 \widetilde{\thetab} \leftarrow \widetilde{\thetab} - \eta \cdot \nabla_{\widetilde{\thetab}} J_{\thetab} (\mb \odot \widetilde{\thetab}) - \beta \cdot \mb \odot \widetilde{\thetab}  ,
\end{equation}
where $\eta > 0$ is the learning rate and $\beta$ is a small weight-decay constant that helps the learned parameters generalize to the unseen data. Here, the back-propagated gradients flow through the fixed mask $\mb$ and thus the masked-out parameters are not updated. Every a few steps, as specified by \textit{the mask update frequency}, we update the mask based on the current weight magnitudes. 
\item \textbf{Return the optimized NTT student} Upon completion of SGD, we are left with an optimized NTT student network $f(\cdot, \mb \odot \widetilde{\thetab} (0))$.
\end{enumerate}

For all experiments in this article, we used the procedure outlined above. The specific experimental setup and all hyper-parameter choices are provided in the corresponding experiment sections and Appendix \ref{supp:experiment_setup}. Below we offer some general remarks pertaining to the implementation. 
Neural networks for classification tasks are usually trained using a softmax activation defined on the logits at the output. However, the softmax activation contains no trainable parameters. Therefore, we used the logits output directly for NTT. On subsequent supervised learning tasks, in which labels are present, we equipped the last layer with a softmax activation unless mentioned otherwise. 
With slight modifications, we applied the outlined steps to produce either layerwise or globally sparse networks (see Sec.\ \ref{sec:experiment} for details). In either scheme, we only pruned weights and no bias terms.

\section{Experiments \label{sec:experiment}}

We numerically evaluated the trainability of sparse networks found by the \gls{ntt} algorithm for both \glspl{mlp} and \glspl{cnn}
on standard datasets, including MNIST \citep{Lecun1998}, Fashion MNIST \citep{xiao2017fashion}, CIFAR-10 \citep{Krizhevsky09}, and SVHN \citep{Netzer2011reading}. All experiments were performed in JAX \citep{Jax2018github} together with the neural-tangent library \citep{neuraltangents2020}. The code to reproduce our experiments is available at \url{https://github.com/fmi-basel/neural-tangent-transfer}.

\paragraph{General experiment procedure.}
We adopted the following procedure to evaluate the trainability of the sparse network yielded by NTT. First, we carry out NTT only using label-free data to learn the sparse network initialization (\emph{NTT initialization}). 
To evaluate the learned results, we presented the labels in subsequent supervised learning tasks and used the NTT initialization to solve the task. 

\paragraph{Baselines.}
We compared NTT against the following baselines:
\begin{itemize}
    \item Randomly sampled sparse networks from densely initialized neural networks.  
    \item Variance-scaled random sparse networks derived as a modification of Kaiming initialization \citep{He2014delving}: For $w$ being a random variable taking values as weights in a layer, we set $\text{Var}(w) = 2/(n^{{\text{in}}} \cdot p)$ prior to pruning, where $\text{Var}$ is the variance, $n^{\text{in}}$ is the number of input units (fan-in) to that layer and $p$ is fraction of remaining weights. 
    \item Sparse networks pruned with SNIP \citep{Lee2018snip}, a label-dependent foresight pruning method. Note that SNIP was originally formulated for global pruning; straightforwardly, we extend SNIP to layerwise pruning and refer to this variant as Layerwise-SNIP (see Appendix \ref{supp:snip_logit_snip}).
    
    \item Sparse networks pruned with Logit-SNIP, a label-free variant of SNIP obtained by replacing the supervised loss function that occurs in SNIP's pruning objective with the sum of squares of the network's logits (for details, see Appendix \ref{supp:snip_logit_snip}).
\end{itemize}

\paragraph{Pruning schemes.}
We considered both \emph{layerwise} and \emph{global} pruning schemes for the proposed and the baseline methods. 
In the layerwise scheme, we removed a fixed fraction of the parameters in each layer; in the global scheme, we pruned weights in all layers collectively without enforcing the weight-sparsity level of any specific layer. 
In the main text of this article, we primarily focused on layerwise pruning (Sec. \ref{subsec:probing} and \ref{subsec:layerwise}). However, we devote Section~\ref{subsec:global} to a case study of global pruning methods and defer a more detailed study of global pruning to the Appendix.

\subsection{Visualizing output dynamics of NTT initialization \label{subsec:probing}} 

We first sought to confirm our intuitions underlying the formulation of NTT (cf.\ Fig.~\ref{fig:schematic-outputspace}). 
To that end, we devised a simple toy problem in which we are able to visually compare output evolution of sparse and dense networks. 
Recall that the goal of the NTT is to learn a well-initialized sparse network (NTT student) so that its output dynamics under supervised training mimics the dynamics of its dense counterpart (NTT teacher), as motivated by Eqn.~\eqref{eqn:dense-sparse-net-similar-dynamics} and Fig.~\ref{fig:schematic-outputspace}. 
To see whether this goal is empirically fulfilled, we trained both dense and sparse neural network models on a binary classification task using a subset of the MNIST dataset of handwritten digits (0 and 1). Since this is a binary classification task, the network outputs at each training time are two-dimensional vectors and therefore can be visualized.

\begin{figure}[htbp] 
    \includegraphics[width= 0.45 \textwidth]{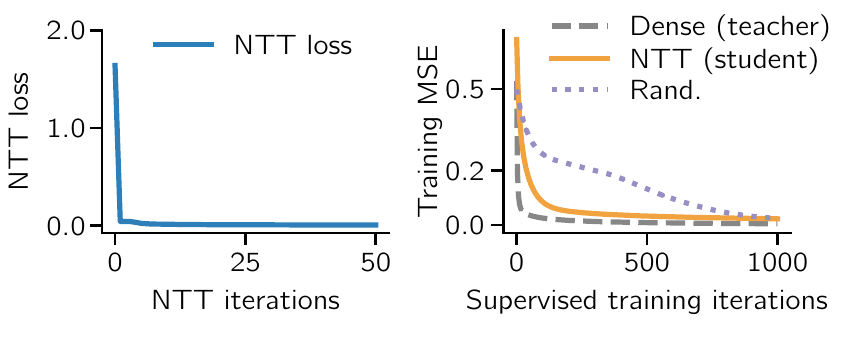}
          \vspace{-0.1 in}
    \caption{\textbf{NTT loss and supervised training loss.} Using the NTT algorithm (Sec.~\ref{subsection:ntt_implementation}), we randomly initialized a dense network (NTT teacher) and used it to derive a sparse network (NTT student). The NTT loss during this process is shown in the left panel. We then used the NTT teacher (dense), the NTT student (sparse), and a random sparse network as initialization to solve the binary digit classification task (right panel). The dense NTT teacher (grey, dashed curve) solved the task with ease. The NTT student (orange curve) solved the task slower than the NTT teacher, but much faster than the random sparse network (purple, dotted curve).   \label{fig:losses_mlp_probing}}
      \vspace{-0.1 in}
\end{figure}

Concretely, we used an \gls{mlp} with two hidden layers containing 300 and 100 units with \glspl{relu} followed with 2~linear output units. We randomly initialized the dense network using the Glorot initialization scheme \citep{Glorot10} and designate it as the NTT teacher. We then perform NTT using the algorithm outlined in subsection \ref{subsection:ntt_implementation} to train a 90\% sparse NTT student network (see Appendix \ref{supp:visualization} for the NTT parameters). During training, the loss of the NTT objective function dropped sharply (Fig.~\ref{fig:losses_mlp_probing} left). 
Up to this point, no data label was used. 

Next, to assess the trainability of the NTT student, we used a supervised label-dependent loss to train the NTT teacher (dense), NTT student (sparse), and a randomly initialized sparse network with 5000~iterations of gradient descent. 
During training, we tracked both the loss and the output evolution of all networks.
All networks solved this simple problem with low error (Fig.~\ref{fig:losses_mlp_probing} right). 
Not surprisingly, the dense NTT teacher network solved this simple task easily, reducing the training error with only a few iterations (dashed, grey curve).
Importantly, the sparse NTT student network (orange curve) minimized the training error much faster than the randomly initialized sparse network (purple, dotted curve).

\begin{figure}[htbp] 
    \includegraphics[width = 0.45 \textwidth]{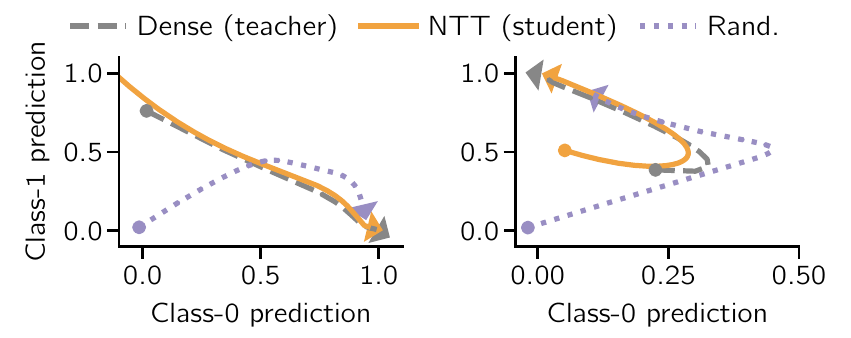}
    \caption{\textbf{The NTT student's output dynamics closely mimic the NTT teacher}. 
Left: output evolution of different networks during supervised training obtained as the averaged prediction of digit-0 images. The arrows indicate the direction of the output evolution. Right: Same as left, but for digit-1 images. 
As expected, we found that the output evolution of the NTT teacher network (dashed grey curve) and the NTT student network (orange curve) closely resemble each other whereas the output evolution of the random sparse control network (dotted purple curve) was notably different.
\label{fig:mlp_probing}}
\vspace{-0.1 in}
\end{figure}

Finally, we visualized the networks' output evolution. Because predictions are two-dimension vectors approximating one-hot encoded binary labels of images, we averaged predictions across images from the same ground-truth class (digit 0 or 1) at each iteration. 
As anticipated, we found that the output evolution of the NTT teacher and student were similar whereas the random sparse initialization was notably different (Fig.~\ref{fig:mlp_probing}).

While the above analysis was conducted using an \gls{mlp}, we have performed a similar experiment with a \gls{cnn} and found qualitatively similar dynamics (Appendix \ref{supp:visualization}). 
These results confirmed our intuition that NTT initialized networks do indeed behave qualitatively similar to their dense counterparts during training.

\subsection{Evaluating layerwise sparse NTT initialization \label{subsec:layerwise}}
Having confirmed out intuition on the toy example, we now proceed to examine our method on larger datasets including MNIST, Fashion MNIST, CIFAR-10, and SVHN. 

\begin{figure}[htb] 
    \centering
    \includegraphics[width = 0.45 \textwidth]{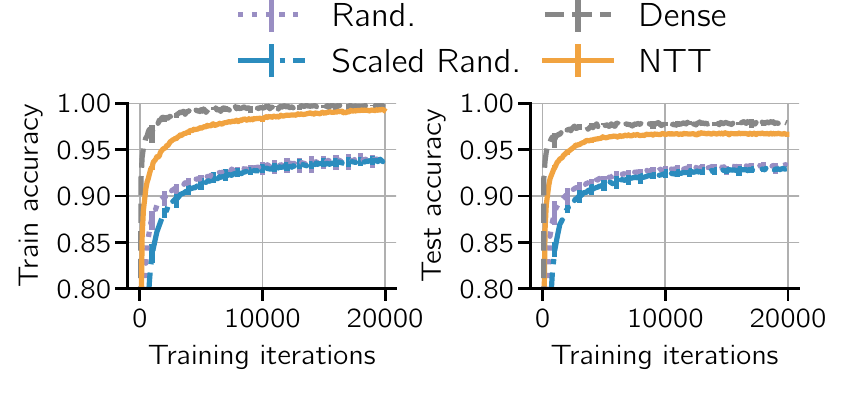}
    \vspace{-0.2 in}
    \caption{\textbf{Learning curves of NTT and baseline networks trained on MNIST.} The curves illustrate training and testing accuracy of Lenet-300-100 networks. All networks except the dense ones (grey, dashed curves) are sparse networks with 3\% nonzero weights. Each curve is the average of five independent trials. 
    Error bars denote the minimum and maximum of any trial. \label{fig:paper_plot_mnist_mlp_lenet}} 
\end{figure}

\begin{figure}[htb] 
    \centering
    \includegraphics[width = 0.45 \textwidth]{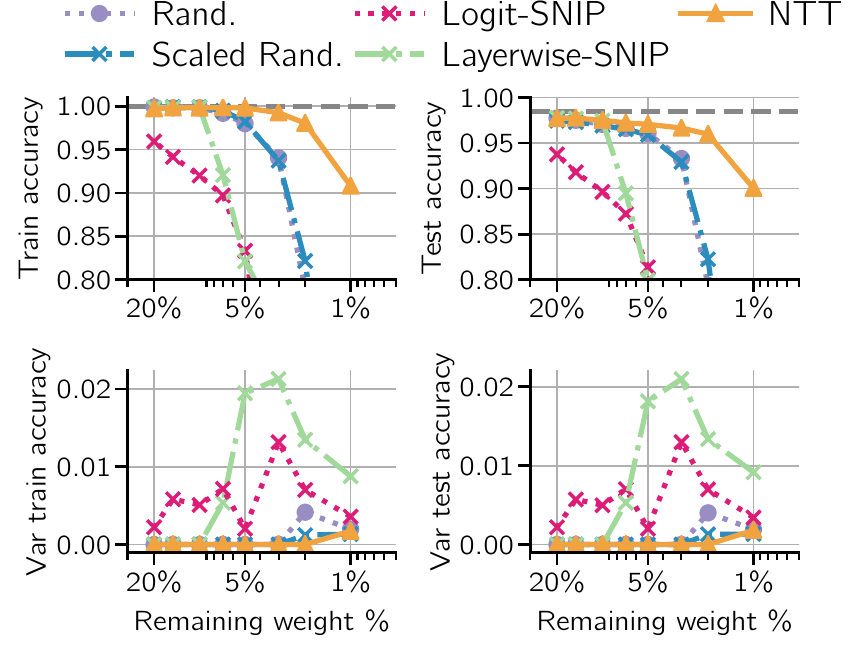}
    \caption{\textbf{Supervised performance on MNIST using layerwise pruned Lenet-300-100.}
    Top row: Classification accuracy on training (left) and testing datasets (right) for different levels of sparseness in networks with NTT initialization and baseline methods.
    Bottom row: Corresponding endpoint variance in accuracy of the data points in the row above computed over 5 repetitions.
    \label{fig:overall_mnist_mlp_lenet}}
    \vspace{-0.2 in}
\end{figure}

\begin{figure}[htb] 
   \centering
    \includegraphics[width = 0.45 \textwidth]{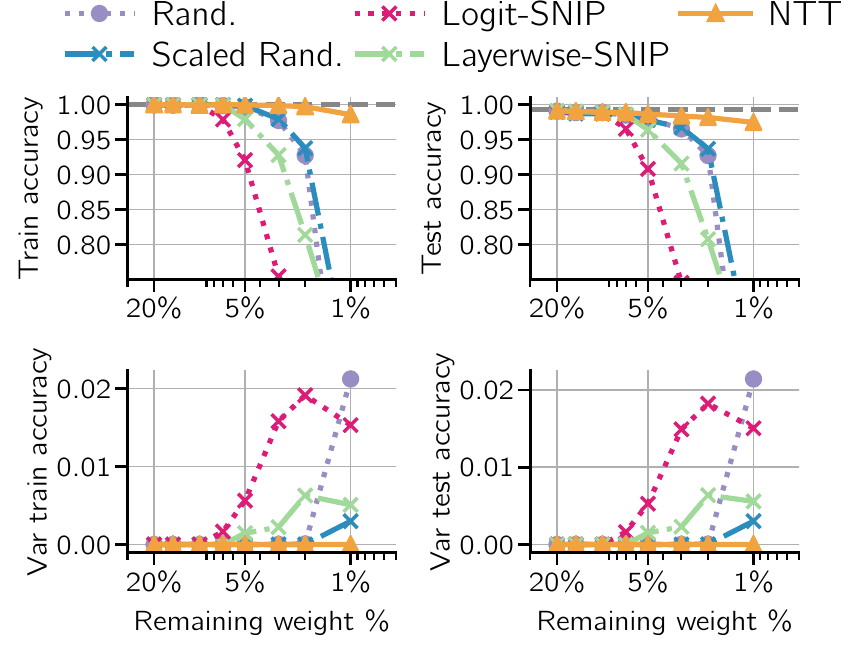}
    \caption{\textbf{Supervised performance on MNIST using layerwise pruned Lenet-5-Caffe.
    } \label{fig:overall_mnist_cnn_lenet}}
    \vspace{-0.2 in}
\end{figure}

\paragraph{MNIST.} 
In the MNIST experiments, we first applied NTT to \glspl{mlp} of the Lenet-300-100 architecture \citep{Lecun1998} at different sparsity levels. We noted that NTT initialized networks of a given sparsity could be trained faster and to higher accuracy than corresponding controls (Fig.~\ref{fig:paper_plot_mnist_mlp_lenet}).

When comparing the classification accuracy at different sparsity levels, NTT initialized networks showed generally higher accuracy (Fig.~\ref{fig:overall_mnist_mlp_lenet} top row).
Moreover, we found that NTT initialization resulted in a lower variance of their final classification accuracy between different repetitions of the same experiment (Fig.~\ref{fig:overall_mnist_mlp_lenet}, bottom row).
These findings illustrate that NTT, as intended, finds sparse networks that can be trained faster, with higher accuracy, and reduced variability between trials.

To test whether these findings would generalize to \glspl{cnn}, we repeated the above experiments using a LeNet-5-Caffe \gls{cnn} architecture, which is a modified variant of the LeNet-5 model proposed by \citet{Lecun1998}.
In good agreement with the \gls{mlp} results, we found that NTT initialized sparse networks showed significantly better performance over baseline methods (Fig.~\ref{fig:overall_mnist_cnn_lenet} top)
and a notable reduction of the variance of their final accuracy (Fig.~\ref{fig:overall_mnist_cnn_lenet} bottom).

\paragraph{Fashion MNIST.} 
We repeated our experiments on Fashion MNIST. Similar to the MNIST task, we used Lenet-300-100 and Lenet-5.
Both \gls{mlp} (LeNet-300-100) and \gls{cnn} (LeNet-5-Caffe) trained on fashion MNIST recapitulated our major findings on the MNIST experiments from above: Compared to baseline methods,
NTT initialized networks converged faster, exhibited higher endpoint accuracy, and showed less variability over different repeats (Figs.~\ref{fig:overall_fashion_mnist_mlp_lenet} and~\ref{fig:overall_fashion_mnist_cnn_lenet}). 

\begin{figure}[htb] 
    \centering
    \includegraphics[width = 0.45 \textwidth]{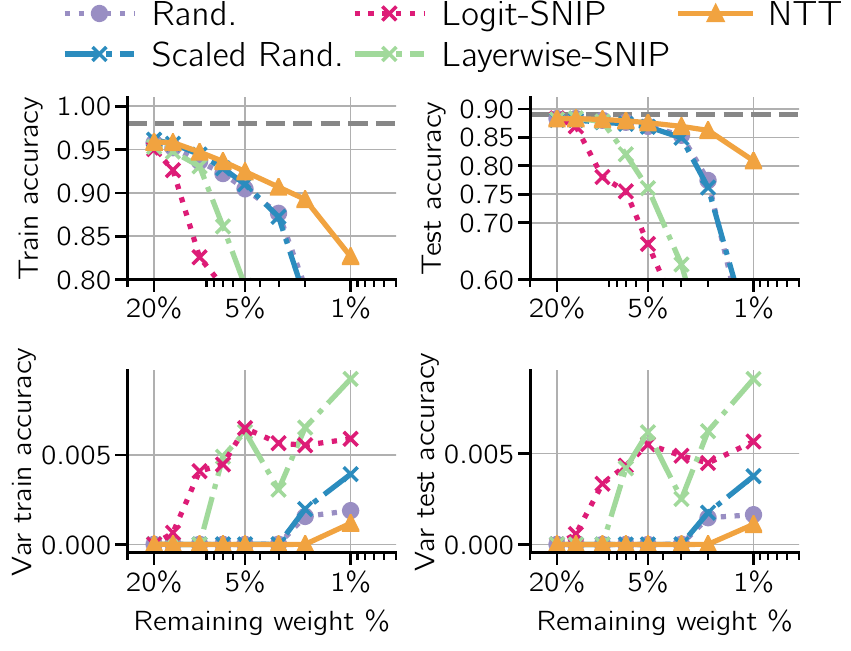}
    \caption{\textbf{Supervised performance on Fashion MNIST using layerwise pruned Lenet-300-100.}} \label{fig:overall_fashion_mnist_mlp_lenet}
    \vspace{-0.1 in}
\end{figure}

\begin{figure}[htb] 
    \centering
    \includegraphics[width = 0.45 \textwidth]{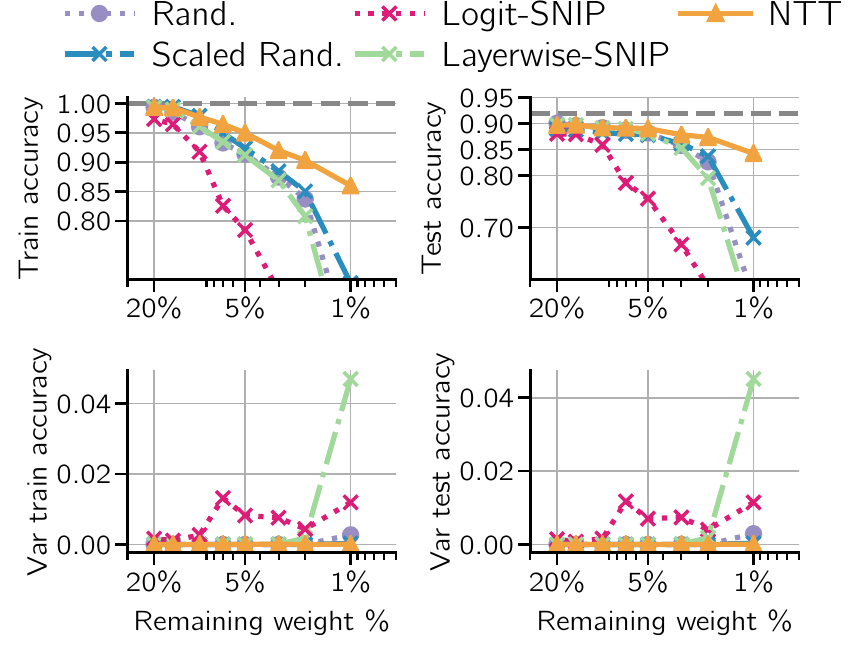}
    \caption{\textbf{Supervised performance on Fashion MNIST using layerwise pruned Lenet-5-Caffe.} \label{fig:overall_fashion_mnist_cnn_lenet}}
    \vspace{-0.1 in}
\end{figure}

\paragraph{CIFAR-10.}
For the CIFAR-10 classification task, we trained a \gls{cnn} model that we called Conv-4. This model has 4 convolution layers followed by 2 feedforward layers with dropout (see Appendix \ref{supp:conv_4} for details), corresponding to a slightly scaled-up version of the default  CNN example from Keras CIFAR-10 classification tutorial\footnote{\url{https://keras.io/examples/cifar10_cnn/}}. 

\begin{figure}[htb] 
    \includegraphics[width = 0.45 \textwidth]{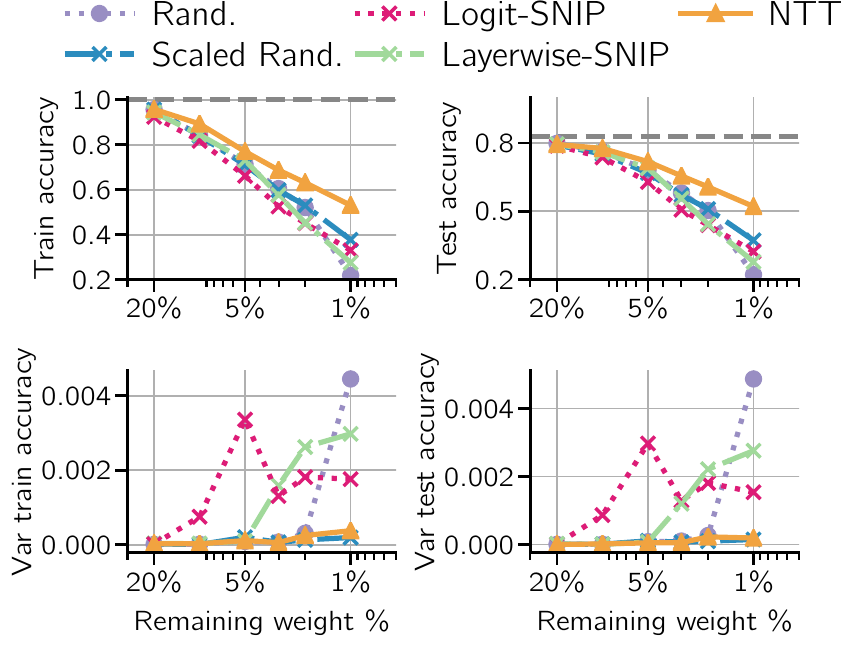}
    \caption{\textbf{Supervised performance on CIFAR-10 using layerwise pruned Conv-4.}  \label{fig:overall_cifar_cnn_conv4}}
\vspace{-0.1 in}
\end{figure}

We found qualitatively similar behavior as in the above experiments: NTT achieved higher endpoint accuracy and reduced variance (Fig.~\ref{fig:overall_cifar_cnn_conv4}) over baseline methods. %

\begin{table*}[t]
\centering
\caption{\footnotesize Supervised performance on CIFAR-10 and SVHN using layerwise pruned Conv-4.}
\label{table:cifar_svhn_results}
\resizebox{0.9\textwidth}{!}{
\begin{tabular}{lcccccc} 
\cmidrule[\heavyrulewidth](lr){1-7} 
\textbf{Dataset} & \multicolumn{3}{c}{CIFAR-10} & \multicolumn{3}{c}{SVHN}
\\
\cmidrule(lr){1-7}
\makecell{Remaining\\weights \%}   & 5\%     & 3\% & 1\%     
     & 5\%     & 3\% & 1\%      \\ 

\cmidrule(lr){1-1}
\cmidrule(lr){2-4}
\cmidrule(lr){5-7}
     
Rand.
& 67.94 $\pm 0.92 $  & 58.09  $\pm 1.02 $  & 22.17 $\pm 6.98 $
& 77.87 $\pm 0.28 $ & 72.09 $\pm 0.66 $ & 18.70  $\pm 1.47 $	
\\

Scaled Rand.
& 66.86 $\pm 1.05 $ & 57.17 $\pm 0.84 $  & 37.24 $\pm 1.22 $    
&  79.55 $\pm 0.26  $ & 75.22 $\pm 0.27 $ & 48.47 $\pm 3.10 $
\\

Logit-SNIP
& 62.93  $\pm 5.45 $ & 50.77 $\pm 3.59 $ & 33.30  $\pm 3.92 $
& 74.89 $\pm 4.40  $ & 66.21 $\pm 4.68 $ & 41.92 $\pm 5.90 $
\\

Layerwise-SNIP
& 68.90 $\pm 0.87 $ & 55.64 $\pm 3.47$  & 27.84 $\pm 5.25$  
& 75.62 $\pm 3.28 $ & 64.08 $\pm 4.77 $ &  43.58 $\pm 3.30  $
\\

NTT
& \bf 71.84 $\pm$ 0.72 & \bf 65.58 $\pm$ 0.76 & \bf 52.14  $\pm$ 1.39
& \bf 81.49 $\pm $ 0.45 & \bf 79.04 $\pm$ 0.32 & \bf 71.63  $\pm$ 0.80
\\ 
\cmidrule[\heavyrulewidth](lr){1-7}
\end{tabular}
}
\vspace{-0.2cm}
\end{table*}

\paragraph{SVHN.}
Using Conv-4, we evaluated the performance of NTT on the larger SVHN dataset (Table \ref{table:cifar_svhn_results}). Consistent with our previous findings, we found NTT initialized networks outperformed other approaches at all sparsity levels.

\subsection{Evaluating globally sparse NTT initialization \label{subsec:global}}
In the experiments reported above, we have focused on layerwise pruning. Compared to the layerwise pruning setting, global pruning typically results in higher accuracy, but with a lower gain in theoretical speedup in CNNs \citep{Blalock2020what}. To investigate the performance of NTT in both settings, we applied NTT to produce globally sparse Conv-4 on CIFAR-10. The results are summarized in Table \ref{table:layerwise_vs_global}. 
These results confirmed our expectations that all methods achieved higher performance in the global setting than in the corresponding layerwise setting. Also, we observed that NTT initialized networks offer a small but consistent improvement over other methods in the global setting. Admittedly, the improvement is lower than in the layerwise case.

\begin{table}[htbp]
\centering
\caption{\footnotesize Supervised performance on CIFAR-10 using globally pruned Conv-4.}
\label{table:layerwise_vs_global}
\resizebox{0.45\textwidth}{!}{
\begin{tabular}{lcccccc} 
\cmidrule[\heavyrulewidth](lr){1-4} 
\makecell{Remaining\\weights \%}     
     & 5\%     & 3\% & 1\%      \\ 

\cmidrule(lr){1-1}
\cmidrule(lr){2-4}

Logit-SNIP
& 77.27	$\pm 0.36 $ & 75.13 $\pm 0.66 $ &	65.90  $\pm 0.48 $
\\

SNIP
& 78.28	 $\pm 0.33 $ & 76.00 $\pm 0.48 $ &	67.12  $\pm 0.46 $
\\

NTT
& \bf 78.85  $\pm$ 0.36  & \bf	76.28 $\pm$ 0.32   & \bf	68.81  $\pm$ 0.28  
\\ 
\cmidrule[\heavyrulewidth](lr){1-4}
\end{tabular}
}
\vspace{-0.2cm}
\end{table}
We next assessed the accuracy-efficiency tradeoff imposed by layerwise and global pruning. First, we numerically confirmed that global pruning tends to preserve orders of magnitude more parameters in the convolutional layers than layerwise pruning at a fixed sparsity level (top panel, Fig.~\ref{fig:compare_layerwise_label_dependent_cifar_cnn_conv4}). Since the computational cost of CNNs is dominated by these convolutional layers \citep{Li2017Pruning}, global pruning leads to a much lower speedup than lawerwise pruning (bottom panel, Fig.~\ref{fig:compare_layerwise_label_dependent_cifar_cnn_conv4}). Thus NTT can be used to exploit this tradeoff between accuracy and efficiency as it excels at both layerwise and global pruning.

\begin{figure}[h] 
  \center
    \includegraphics[width = 0.45 \textwidth]{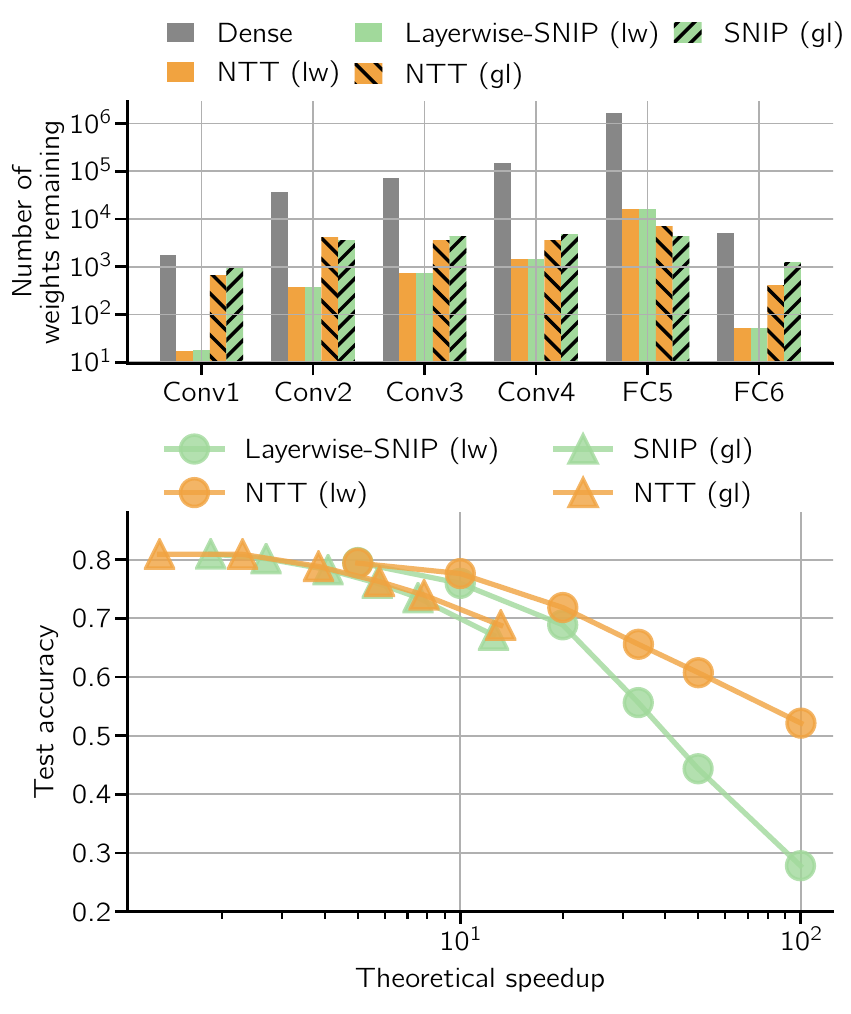}
    \vspace{-0.2 in}
    \caption{\textbf{Tradeoff between model quality and efficiency.} Layerwise and global pruning methods are respectively indicated by lw and gl in figure legends. Top panel: The number of remaining weights in each layer of 1\% sparse Conv-4 networks on CIFAR-10. Bottom panel: Accuracy for Conv-4 on CIFAR-10 for several sparsity levels and their corresponding theoretical speedups. The theoretical speedup is defined as the number of multiply-adds used by the dense model divided by the number used by the sparse model \citep{Blalock2020what}. \label{fig:compare_layerwise_label_dependent_cifar_cnn_conv4}}
    \vspace{-0.2 in}
\end{figure}

\section{Discussion}

We have introduced NTT, a foresight pruning method that finds trainable sparse neural networks capable of learning rapidly and generalizing well in subsequent supervised learning tasks. 
To that end, our method draws on theoretical insights from linearized neural networks and \glspl{ntk} to derive sparse neural networks whose training evolution in network output space remains close to the path of a corresponding dense neural network.
The two key advantages of our method are that (i)~it only requires label-free data, and (ii)~it can be used to find trainable layerwise sparse networks, e.g., \glspl{cnn} with sparse convolutional filters, which are desirable for energy-efficient inference.%

Our work was inspired by the \gls{lth} \citep{Frankle2019} in that we aim to identify ``winning tickets,'' i.e., sparse networks that can be trained to high accuracy. However, our approach is notably different because it does not require supervised training of a dense network
that is pruned subsequently. 
Instead, our approach transfers the \emph{training dynamics} of an \emph{untrained} dense network onto a sparse network at the time of its initialization.
This idea is partially inspired by ``reservoir transfer'' \citep{He2019Reservoir} from a neuromorphic computing context. 
Our work is related to other foresight pruning methods that seek to ensure unimpeded error gradients \citep{Lee2018snip,Wang2020picking}. 
Notably, both \citep{Wang2020picking} and our work involve the \gls{ntk}, but in different contexts. While NTT relies on the \gls{ntk} to constrain the student networks' output evolution under subsequent supervised learning, 
\citet{Wang2020picking} invoked the NTK to study the networks' error gradients during supervised learning. As a result, their pruning criterion depends on a supervised loss function and thus on labeled data.
NTT, on the other hand, is independent of the choice of the loss function and thus inherently label-free.

In biology, the sparseness of both connectivity and activity is thought to be a key design principle for energy-efficient information processing \citep{Sterling2017principles}.
For similar reasons, sparse neural networks are increasingly moving into the focus in machine learning \citep{gong_compressing_2014,lebedev_speeding-up_2015,jaderberg_speeding_2014}
and neuromorphic engineers \citep{Neftci2019,Roy2019towards}. %
In all likelihood, neurobiology has evolved efficient algorithms implemented in its organic wetware to initialize its neural networks in an intelligent and energy-efficient way 
which may encode inductive biases and promote learning  
\citep{Zador2019,richards_deep_2019}.

One of the limitations of the present work is that the calculation of NTK matrices is itself computationally expensive. Therefore we are particularly excited about the possibility of exploring novel, more efficient ways to compute the NTK in the future (e.g., \citet{Li2019enhanced}). 
Another limitation is that we have only considered vision tasks and feed-forward networks in this article. It remains future work to explore NTT in other domains and architectures, e.g., temporal signal processing tasks.

Studies on efficient initialization techniques for sparse neural networks are still at their dawn, and profound theoretical insights are missing.
In the foreseeable future, when the development of dedicated hardware will translate the hypothetical gains of sparse networks into significant performance gains, effective initialization schemes will become indispensable. 
With the introduction of NTT, we have made one step in this direction.  This work opens up exciting new vistas for future work on elucidating the operating principles of computationally efficient sparse neural networks

\section*{Acknowledgements}
This work was supported by the Novartis Research Foundation. We thank the anonymous reviewers for their valuable input. 
Further thanks go to David Belius, Ivan Dokmani\'c, and Yue (Kris) Wu for helpful discussions. 
Finally, we would like to thank the JAX and neural-tangents teams for developing libraries that make NTK computation fast and straightforward.

\bibliography{ntt.bib}

\bibliographystyle{icml2020}

\clearpage
\appendix
\onecolumn
\section*{Appendix}
\section{Proof of Proposition~\ref{prop:linear-opt} }  \label{proof-to-proposition}
Here we prove Proposition~\ref{prop:linear-opt} in the main text.

\begin{prop} 
Consider a linear NTT teacher model $f(\xb, \ab(0)) = \ab(0)^\top \xb$, where $\xb$ and $\ab(0) \in \RR^d$ are model input and initial parameters. Suppose that we are given a linear NTT student model $g_{\mb}(\xb, \widetilde{\ab}(0)) = (\mb \odot \widetilde{\ab}(0))^{\top} \xb$ whose initial parameters $\widetilde{\ab}(0)$ are NTT-optimal in the sense that $J_{\ab(0)}(\mb \odot \widetilde{\ab} (0)) = 0$. Then upon continuous-time and quadratic-loss based gradient descent training, the dense and sparse models' outputs evolve in the same way:
\[ f(\xb, \ab(t)) = g_{\mb}(\xb, \widetilde{\ab} (t)), \]
for all training inputs $\xb$ and time steps $t \geq 0$.
\end{prop}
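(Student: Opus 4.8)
The plan is to derive the statement directly from the closed-form solutions \eqref{eqn:linear-dynamics-solved} and \eqref{eqn:linear-dynamics-solved-sparse}, after first noting the key simplification available in the linear regime: for a linear model, the linearization in \eqref{eqn:linearized-network} is not an approximation but an identity. Concretely, for the teacher $f(\xb,\ab)=\ab^\top\xb$ one has $\nabla_{\ab}f(\xb,\ab(0))=\xb$, so $\flin(\xb,\ab)=\ab(0)^\top\xb+(\ab-\ab(0))^\top\xb=\ab^\top\xb=f(\xb,\ab)$; similarly, since $\nabla_{\widetilde\ab}\,g_{\mb}(\xb,\widetilde\ab)=\mb\odot\xb$, the linearized student equals $g_{\mb}$ itself. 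Hence, under the gradient flow \eqref{eqn:gradient-flow} with the quadratic loss, the \emph{exact} output trajectories of teacher and student are given verbatim by \eqref{eqn:linear-dynamics-solved} and \eqref{eqn:linear-dynamics-solved-sparse}, namely $f(\Xcal,\ab(t))=e^{-t\Hb_{\ab(0)}}f(\Xcal,\ab(0))+(\Ib-e^{-t\Hb_{\ab(0)}})\yb$ and $g_{\mb}(\Xcal,\widetilde\ab(t))=e^{-t\Hb_{\mb\odot\widetilde\ab(0)}}g_{\mb}(\Xcal,\widetilde\ab(0))+(\Ib-e^{-t\Hb_{\mb\odot\widetilde\ab(0)}})\yb$.

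Next I would invoke NTT-optimality. Since $J_{\ab(0)}(\mb\odot\widetilde\ab(0))$ in \eqref{eq:ntt_objective} is a sum of two nonnegative terms, the hypothesis $J_{\ab(0)}(\mb\odot\widetilde\ab(0))=0$ forces each term to vanish: the first term gives the matched starting point $g_{\mb}(\Xcal,\widetilde\ab(0))=f(\Xcal,\ab(0))$, and the second gives the matched empirical NTK $\Hb_{\mb\odot\widetilde\ab(0)}=\Hb_{\ab(0)}$. Substituting these two equalities into the two closed-form trajectories above makes them literally the same function of $t$, so $g_{\mb}(\Xcal,\widetilde\ab(t))=f(\Xcal,\ab(t))$ for every $t\ge 0$; reading off the $i$-th coordinate of the concatenated outputs gives $g_{\mb}(\xb_i,\widetilde\ab(t))=f(\xb_i,\ab(t))$ for each training input $\xb_i$, which is exactly the claim.

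The only points that require genuine care, as opposed to bookkeeping, are (i) verifying that the linearization is exact in the linear case, so that \eqref{eqn:linear-dynamics-solved} and \eqref{eqn:linear-dynamics-solved-sparse} hold with equality rather than approximately, and (ii) correctly computing $\nabla_{\widetilde\ab}\,g_{\mb}(\xb,\widetilde\ab)=\mb\odot\xb$ for the Hadamard-masked linear map, which is what makes $\Hb_{\mb\odot\widetilde\ab(0)}$ the Gram matrix of the masked inputs. I do not anticipate a real obstacle. If one prefers to bypass the closed forms, an equivalent route is to differentiate the outputs along the flow as in \eqref{eqn:output-dynamics}: writing $u(t)=f(\Xcal,\ab(t))$ and $\widetilde u(t)=g_{\mb}(\Xcal,\widetilde\ab(t))$ and applying the chain rule yields $\tfrac{\d u}{\d t}=-\Hb_{\ab(0)}(u-\yb)$ and $\tfrac{\d \widetilde u}{\d t}=-\Hb_{\mb\odot\widetilde\ab(0)}(\widetilde u-\yb)$, where the Hessian-type matrices are constant in $t$ precisely because the models are linear; NTT-optimality then makes these the same linear ODE with the same initial condition, and uniqueness of solutions closes the argument.
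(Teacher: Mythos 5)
Your proposal is correct and follows essentially the same route as the paper's proof: solve the constant-coefficient linear ODE for the output evolution of both models, observe that $J_{\ab(0)}(\mb\odot\widetilde\ab(0))=0$ forces both nonnegative terms to vanish (matched initial outputs and matched Gram/NTK matrices), and compare the two closed-form trajectories. Your explicit check that the linearization is exact for linear models, and the alternative ODE-uniqueness argument, are welcome refinements but do not change the substance of the argument.
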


\begin{proof}[Proof of Proposition~\ref{prop:linear-opt}]

Let $\{\xb_i, y_i\}_{i = 1}^n \subset \RR^d \times \RR$ be a training dataset of $n$~input-target pairs. We first consider the dense linear model $f(\xb, \ab) = \ab^\top \xb$. We use the shorthand notation $\bm{X} = [\xb_i]_{i \in [n]} \in \RR^{d \times n}$ as the column-wise concatenation of training inputs, $f(\bm{X}, \ab )=  \bm{X}^\top \ab \in \RR^n$ as the vector whose entries are outputs of the dense linear model, and $\yb = (y_i)_{i \in [n]} \in \RR^n$ as the corresponding targets. As a special case of  \citep[Lemma 3.1]{Arora2019On}, namely when the model is assumed to be linear, the model output follows the evolution

\begin{equation} \label{eq:linear_model_ode}
\frac{\d f(\bm{X}, \ab (t) ) }{ \d t}  = - \Hb  \Big [ f \Big ( \bm{X}, \ab (t) \Big ) - \yb \Big ], 
\end{equation}
where $\Hb = \bm{X}^\top \bm{X} \in \RR^{n \times n}$. The solution of the linear ODE  \eqref{eq:linear_model_ode} is given by
\begin{equation} \label{eq:linear_evolution_dense} 
f \Big (\bm{X}, \ab (t) \Big ) = e^{- t \Hb} f \Big (\bm{X}, \ab(0) \Big ) + \left [ \Ib - e^{-t \Hb} \right] \yb.
\end{equation}

We now turn to the case of the sparse linear model $g_{\mb}(\widetilde{\ab}, \xb) = (\mb \odot \widetilde{\ab})^{\top} \xb$. For convenience, we write $g_{\mb} \Big (\bm{X}, \ab (t) \Big ) = \bm{X}^\top \text{diag}(\mb) \ab \in \RR^n$ as a vector whose entries are outputs of the sparse linear model, where $\text{diag}(\cdot)$ transforms the $d$-dimensional vector $\mb$ into a $d$-by-$d$ diagonal matrix. Similar to the case of the dense model, the sparse model's output dynamics are characterized by the linear ODE

\begin{equation} \label{eq:linear_sparse_model_ode}
\frac{\d g_{\mb} \big (\bm{X}, \ab (t) \big ) }{ \d t}  = - \widetilde{\Hb}  \Big [ g_{\mb} \Big ( \bm{X}, \ab (t) \Big ) - \yb \Big ], 
\end{equation}
where $\widetilde{\Hb} = (\text{diag}(\mb) \bm{X})^\top \text{diag}(\mb) \bm{X} \in \RR^{n \times n}$. Solving the linear ODE in Eqn. \eqref{eq:linear_sparse_model_ode}, we get
\begin{equation} \label{eq:linear_evolution_sparse} 
g_{\mb} \Big (\bm{X}, \ab (t) \Big ) = e^{- t \widetilde{\Hb}} g_{\mb} \Big (\bm{X}, \widetilde{\ab} (0) \Big ) + [ \Ib - e^{-t \widetilde{\Hb}}] \yb.
\end{equation}

Note that the NTT objective (Eqn. \eqref{eq:ntt_objective}) achieves 0 only if $\Hb = \widetilde{\Hb}$ and $ f\Big (\bm{X}, \ab (0) \Big )=  g_{\mb} \Big (\bm{X}, \widetilde{\ab} (0) \Big ) $. Comparing Eqn. \eqref{eq:linear_evolution_dense} and Eqn. \eqref{eq:linear_evolution_sparse}, we see 

\[ f(\xb, \ab(t)) = g_{\mb} (\xb, \widetilde{\ab} (t)), \]
for all $\xb \in \{\xb_i\}_{i = 1}^n$ and timesteps $t \geq 0$ as claimed. 
\end{proof}

\section{Details of experiment setup \label{supp:experiment_setup}}

We proceed to introduce the details of experiments reported in the main text, including the model architecture, optimization hyper-parameters, and baseline methods. All experiments were performed using JAX \citep{Jax2018github} and the neural-tangents library \citep{neuraltangents2020}. \\

\subsection{Neural network architecture \label{supp:conv_4}}

For most of the MNIST and Fashion MNIST classification tasks, we use the standard Lenet-300-100 MLP and Lenet-5-caffe CNN architecture together with Relu activations for hidden layers and softmax cross-entropy loss on logit outputs. One exception is the toy example reported in Sec. \ref{subsec:probing}, where we used 2~linear output neurons to perform regression. 

For the CIFAR-10 and SVHN datasets, we used a CNN model consisting of 4~convolution layers followed by 2 feedforward layers with dropout (Table \ref{tab:keras_net}). This can be considered as a slightly scaled-up version of the CNN from the Keras tutorial\footnote{\url{https://keras.io/examples/cifar10_cnn/}}, in which the only modification we made was to double the number of filters in each convolutional layer. 

\begin{table}[ht]
\centering
\footnotesize
\begin{tabular}{lccccc} \toprule
Operation     & Filter size   & \# Filters & Stride         &   Dropout rate  & Activation \\ \midrule
3x32x32 input &   --          & --         & --             &   --   &  --                     \\ 
Conv          & $3 \times 3$  & 64         & $1 \times 1$   &   --   &  ReLu                   \\ 
Conv          & $3 \times 3$  & 64         & $1 \times 1$   &   --   &  ReLu                   \\ 
MaxPool       & --            & --         & $2 \times 2$   &  0.25     &  --                  \\ 
Conv          & $3 \times 3$  & 128        & $1 \times 1$   &   --   &  ReLu                  \\ 
Conv          & $3 \times 3$  & 128        & $1 \times 1$   &   --   &  ReLu                  \\ 
MaxPool       & --            & --         & $2 \times 2$   &  0.25    &  --                \\ 
FC            & --            & 512        & --             &   0.5   &  ReLu              \\ 
FC            & --            & 10         & --             &   --    &  Softmax                  \\  \bottomrule
\end{tabular}
\caption{The Conv-4 architecture used for the CIFAR-10 and SVHN tasks. The dropout rate is defined to be the fraction of the input units to drop.}
\label{tab:keras_net}
\end{table}%

\subsection{NTT hyperparameters \label{supp:ntt_experiment}}
In Table \ref{tab:ntt_params} we summarize the hyperparameters used in the NTT optimization stage. 
For each experiment, we initialized NTT teachers using the Glorot initialization scheme \citep{Glorot10}; we then perform perform gradient-based optimization using the Adam optimizer \citep{Kingma2015adam} with various learning rates and batch sizes (see Table \ref{tab:ntt_params}). \\

\begin{table}[ht]
\centering
\footnotesize
\scalebox{0.85}{
\begin{tabular}{llcccccc} \toprule
Task                                & Model         & \shortstack{Epoch \\number} & \shortstack{Batch \\ size} & \shortstack{Learning \\ rate $\eta$}  & \shortstack[l]{Mask update \\ frequency} &  \shortstack{Strength \\ parameter $\gamma^2$}  & \shortstack{weight-decay \\ constant $\beta$}\\ \midrule
\multirow{2}{*}{Visualization}  & Lenet-300-100          & 5000 (full-batch)          & 500        & 1e-03   &   100 iters     &  1e-5       & 0             \\ 
                                & Lenet-5-caffe         & 5000 (full-batch)           & 500         & 5e-04   &   100 iters     &  1e-6      & 0              \\ \midrule
\multirow{2}{*}{MNIST}          & Lenet-300-100 & 20                          & 64          & 5e-04   &   100 iters     &  1e-3         & 1e-4          \\ 
                                & Lenet-5-caffe & 20                          & 64          & 5e-04   &   100 iters     &  1e-3         & 1e-5          \\ \midrule
\multirow{2}{*}{Fashion-MNIST}  & Lenet-300-100 & 20                          & 64          & 5e-04   &   100 iters     & 1e-3           & 1e-4         \\ 
                                & Lenet-5-caffe & 20                          & 64          & 5e-04   &   100 iters     &  1e-3           & 1e-5       \\ \midrule
CIFAR-10                        & Conv-4 CNN (see Table \ref{tab:keras_net})     & 20                          & 32          & 5e-04   &   100 iters     &   1e-3    & 1e-8              \\\midrule
SVHN                        & Conv-4 CNN (see Table \ref{tab:keras_net})     & 20                          & 32          & 5e-04   &   100 iters     &   1e-3    & 1e-8              \\
\bottomrule
\end{tabular}}
\caption{Hyperparameters used for NTT in this paper. }
\label{tab:ntt_params}
\end{table}\normalsize

The toy example in Section~\ref{subsec:probing} of the main text deserves some additional comments. For this task, we used 500 images from the MNIST dataset, containing 250 images of each digit 0 and 1. 
We performed 5000~iterations of full-batch gradient descent for this task; for this reason, 5000 is also the total number of epochs.

\subsection{Supervised learning hyperparameters}
Regarding the supervised learning experiments, we spared 10\% of the training data for model validation purposes and only used 90\% for model training. We used the Adam optimizer with learning rate 1e-3, $\beta_1 = 0.9$, and $\beta_2 = 0.999$ for all supervised learning tasks except for the visualization task in Sec.~\ref{subsec:probing}, in which the stochastic gradient descent optimizer with learning rate~0.01 was used. 
In addition, all experiments, except for the visualization task, used a minibatch-size of 64. 
For MNIST and Fashion MNIST experiments, we performed optimization for 50 epochs. On CIFAR-10, we trained for 600 epochs. 

\subsection{Baseline pruning methods \label{supp:snip_logit_snip}}
In this subsection, we first recap the SNIP pruning method \citep{Lee2018snip} and introduce two straightforward extensions of it, Layerwise-SNIP and Logit-SNIP, which were used as baselines for NTT. Finally, we point out some technicalities of random pruning baselines.

\paragraph{SNIP and Layerwise-SNIP}
 Recall that SNIP \citep{Lee2018snip} assigns each neural network parameter $\theta$ a sensitivity score $S(\theta)$ defined as 

\[S(\theta) = \left| \theta \cdot \frac{\partial L_{\thetab} }{\partial \theta} \right|,\]
where $L_{\thetab} = \sum_{i = 1}^{n_B} L( f(\xb_i, \thetab), \yb_i)$ is the loss evaluated over a batch of $n_B$ number of input-output data pairs $\{\xb_i, \yb_i\}_{i = 1}^{n_B}$ and $\thetab$ is the vector of randomly initialized parameters. \citet{Lee2018snip} proposed to remove neural network parameters with lowest sensitivity scores. That is, in its original formulation, SNIP is a global pruning method.
To be consistent with \citep{Lee2018snip}, we reserve the terminology SNIP to only be used in global pruning context.
A straightforward way to turn SNIP into a layerwise pruning method is to remove a fixed fraction of the parameters having the lowest sensitivity scores from each layer. We call this extension Layerwise-SNIP. 

\paragraph{Logit-SNIP and layerwise Logit-SNIP}
The SNIP and Layerwise-SNIP methods described above depend on labels. Here we provide a label-free extension: We modify the sensitivity score $S(\theta)$ into a logit-based sensitivity score $\tilde{S}(\theta)$ defined as 

\[\tilde{S}(\theta) = \left| \theta \cdot \frac{\partial Z_{\thetab} }{\partial \theta} \right|,\]
where $Z_{\thetab} = \sum_{i = 1}^{n_B} \|f(\xb_i, \thetab)\|_2^2$. We can perform either global or layerwise pruning in reference to the scores $\tilde{S}(\theta)$. We refer the global pruning criteria as Logit-SNIP and layerwise criteria as layerwise Logit-SNIP. When the context is clear, we may use the terminology Logit-SNIP to refer to either its layerwise or global variant.

\paragraph{Random pruning}
In the main text we have explained two ways to randomly sample sparse neural networks. Note that for these random methods, their global pruning variant is equivalent to their respective layerwise variant: In either formulation, each weight parameter receives an identical chance to be removed and therefore the expected fraction of pruned parameters for each layer is the same. In each run, we randomly remove such expected fraction of parameters from each layer.

\section{Additional experiments and results}

\subsection{Visualizing network output evolution in CNNs \label{supp:visualization}}
We repeated the experiment introduced in Sec.~\ref{subsec:probing} of the main text using a Lenet-5-like architecture with two linear output neurons (Fig.~\ref{fig:cnn_probing}). In good agreement with the MLP results, we found that the NTT teacher and student follow a similar output evolution during supervised training.

\begin{figure}[!ht] 
    \center
    \includegraphics[width = 0.6 \textwidth]{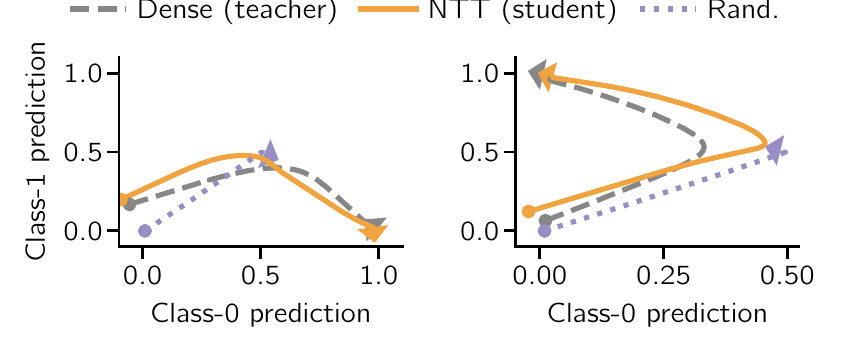}
    \caption{\textbf{A CNN NTT student's output evolution closely follows its dense teacher's.} \label{fig:cnn_probing}}
\end{figure}

\subsection{Experiments on global pruning \label{supp:global_pruning}}

In the main text, we have focused on layerwise pruning. Here we provide experimental results on global pruning methods using various datasets. The overall experiment procedure and hyperparameters used in this set global pruning experiments are identical to the settings in the layerwise experiments, except at one place: During NTT, instead of initializing the binary mask based on weight magnitudes as outlined in Sec \ref{subsection:ntt_implementation}, we found that the NTT optimization process converges slightly faster if we use the Logit-SNIP produced mask as the initialization. Since Logit-SNIP masks do not depend on labels, the NTT procedure remains label-independent. 

\begin{figure}[ht]
\centering     %
\subfigure[Lenet-300-100 on MNIST]{\includegraphics[width=0.45\textwidth]{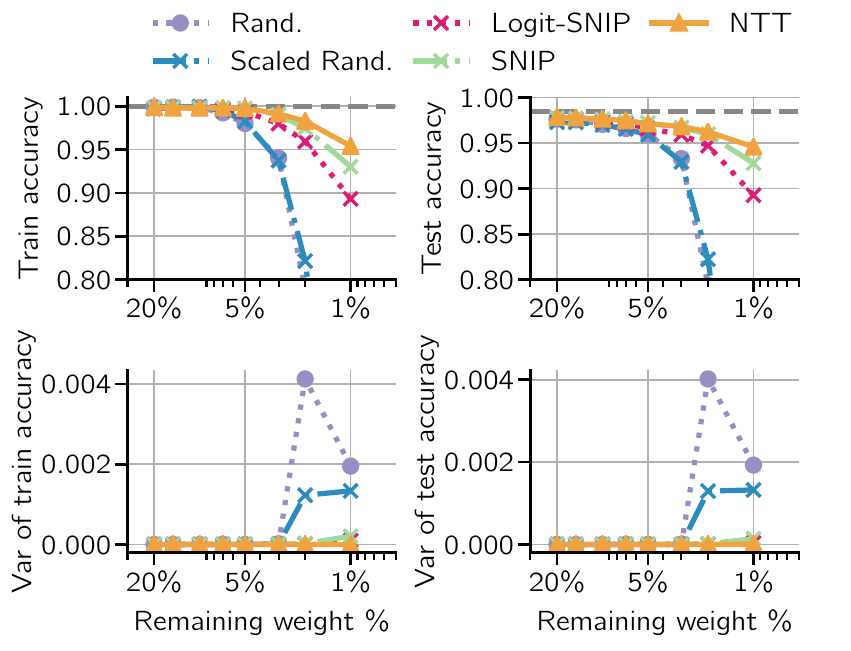}}
~
\subfigure[Lenet-5-caffe on MNIST]{\includegraphics[width=0.45\textwidth]{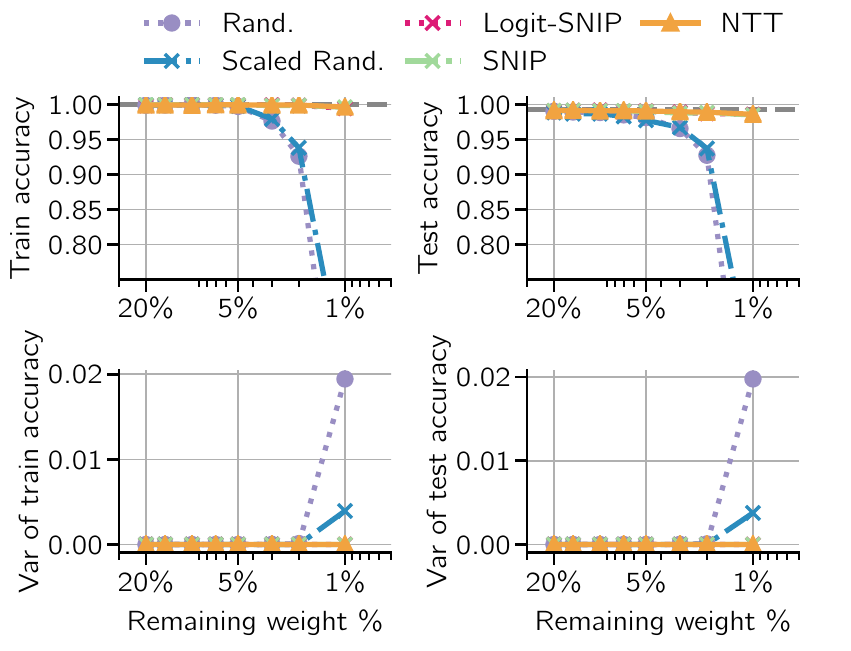}}
\subfigure[Lenet-300-100 on Fashion MNIST]{\includegraphics[width=0.45\textwidth]{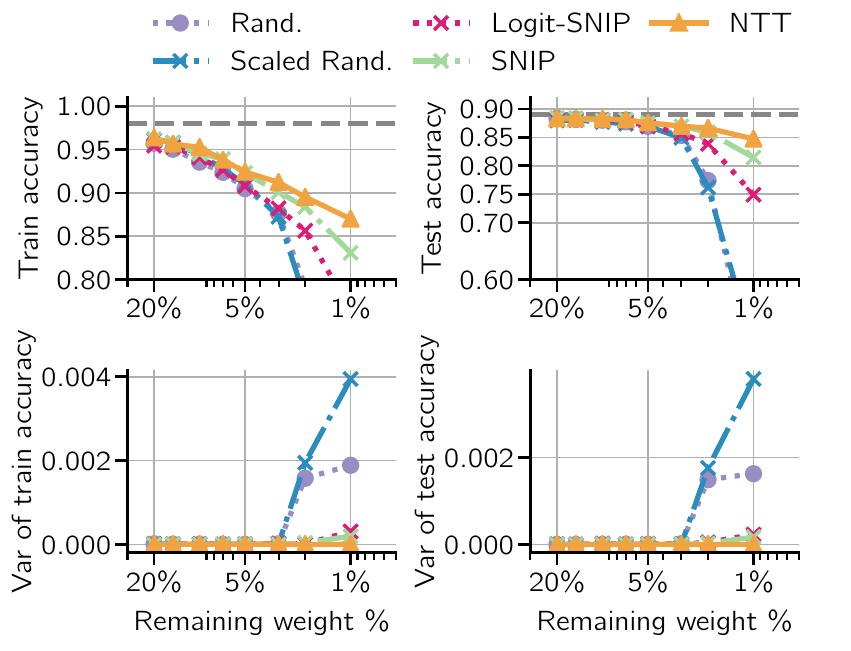}}
~
\subfigure[Lenet-5-caffe on Fashion  MNIST]{\includegraphics[width=0.45\textwidth]{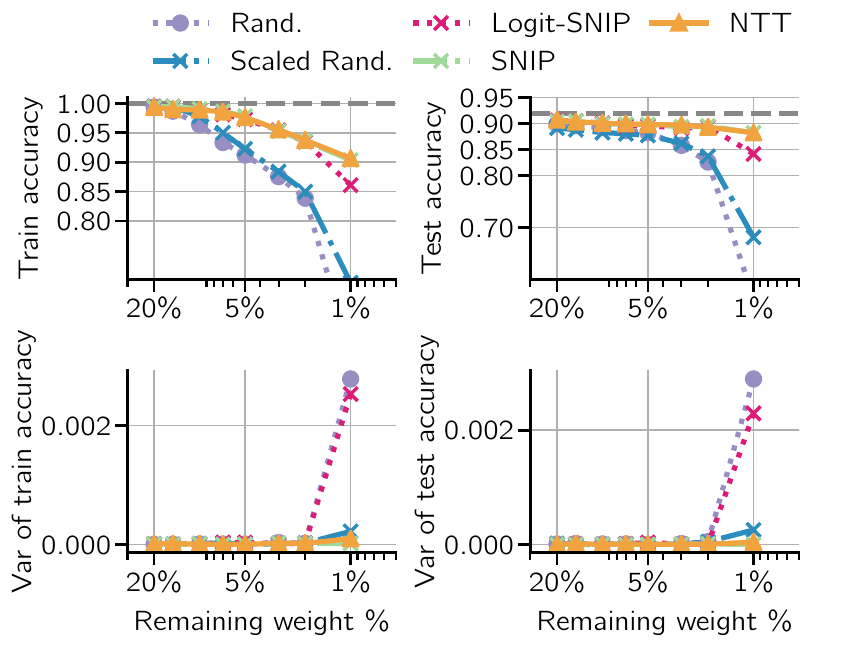}}

\subfigure[Conv-4 on CIFAR-10]{\includegraphics[width=0.45\textwidth]{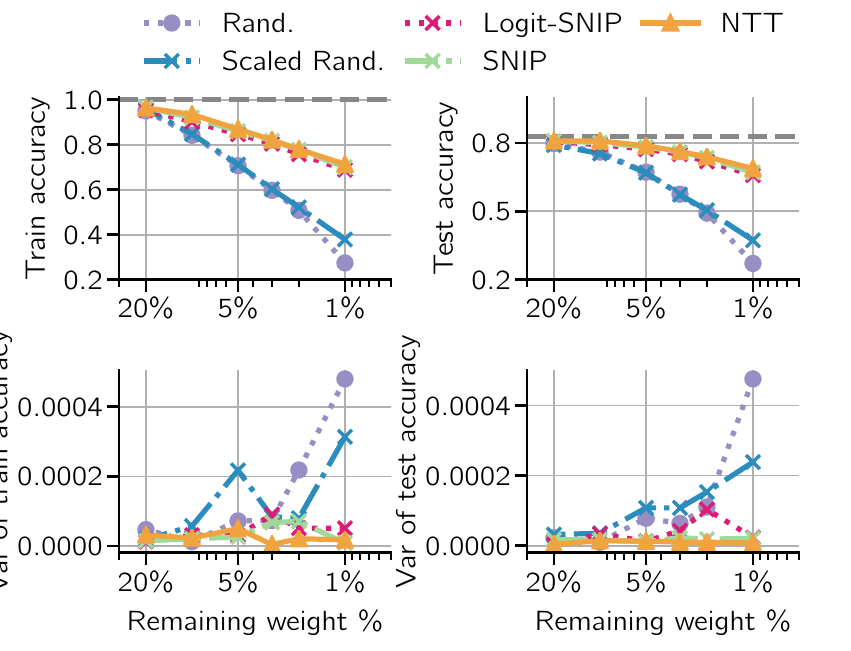}}
\caption{Supervised performance of NTT and baseline methods under global pruning.}

\end{figure}

\end{document}